\documentclass[10pt,twocolumn,letterpaper]{article}

\usepackage{cvpr}
\usepackage{times}
\usepackage{epsfig}
\usepackage{graphicx}
\usepackage{amsmath}
\usepackage{amssymb}
\usepackage{macro}
\usepackage{amsthm}
\usepackage{algorithm}
\usepackage{algorithmic}
\usepackage{url}
\usepackage{multirow}
\usepackage{epstopdf}

\usepackage[bf]{caption}
\usepackage[labelfont=bf]{subcaption}

% If you comment hyperref and then uncomment it, you should delete
% egpaper.aux before re-running latex.  (Or just hit 'q' on the first latex
% run, let it finish, and you should be clear).
%\usepackage[pagebackref=true,breaklinks=true,letterpaper=true,colorlinks,bookmarks=false]{hyperref}

%\newtheorem{corollary}[theorem]{Corollary}
\newtheorem{proposition}{Proposition}

%\usepackage[usenames]{color}
%\definecolor{shadecolour}{gray}{0.4}
%\newcommand{\comment}[1]{{{\color{shadecolour} #1 }}}

\graphicspath{{./}{./Fig/}{./fig/}{./Figs/}{./figs/}}

\newcommand{\BY}{{\mathbf{Y}}}
\newcommand{\BX}{{\mathbf{X}}}

\newcommand{\Bone}{{\mathbf{1}}}

\newcommand{\BB}{{\mathbf{B}}}
\newcommand{\Bb}{{\mathbf{b}}}
\newcommand{\BP}{{\mathbf{P}}}

\newcommand{\BW}{{\mathbf{W}}}
\newcommand{\Bw}{{\mathbf{w}}}

\newcommand{\BZ}{{\mathbf{Z}}}

\newcommand{\BH}{{\mathbf{H}}}
\newcommand{\BI}{{\mathbf{I}}}
\newcommand{\Bx}{{\mathbf{x}}}

\newcommand{\Bz}{{\mathbf{z}}}
\newcommand{\Bg}{{\mathbf{g}}}

\renewcommand{\Lambda}{\varLambda}

\newcommand{\st}{{\,\,\mathrm{s.t.\,\,}}}

\newcommand{\trace}{{\mathrm{trace}}}

\newcommand{\sgn}{{\mathrm{sgn}}}

\hyphenpenalty=950

%%javen
%\ifx\theorem\undefined
%\newtheorem{theorem}{Theorem}
%\newenvironment{theorem*}{\par\noindent{\bf Theorem\ }}{\hfill\\[2mm]}
%\newtheorem{example}{Example}
%\newtheorem{lemma}[theorem]{Lemma}
%\newtheorem{proposition}[theorem]{Proposition}
%\newtheorem{remark}[theorem]{Remark}
%\newtheorem{corollary}[theorem]{Corollary}
%\newenvironment{corollary*}{\par\noindent{\bf Corollary\ }}{\hfill\\[2mm]}
%\newtheorem{definition}[theorem]{Definition}
%\newtheorem{conjecture}[theorem]{Conjecture}
%\newtheorem{assumption}[theorem]{Assumption}
%%\newtheorem{thm}{Theorem}
%%\newtheorem{thm}{Theorem}[section]
%%\newtheorem{cor}[thm]{Corollary}
%%\newtheorem{lem}[thm]{Lemma}
%\fi
%\newcommand{\eq}[1]{(\ref{#1})}
%\newcommand{\RR}{\mathbb{R}}
%\newcommand{\NN}{\mathbb{N}}
%\newcommand{\ZZ}{\mathbb{Z}}
%\newcommand{\EE}{\mathbb{E}}
%\DeclareMathOperator{\Ncal}{\mathcal{N}}
%\newcommand{\Bc}{{\mathbf{c}}}
%\def\Var{{\rm Var}\,}
%\def\E{{\rm E}\,}
%\newcommand{\st}{{\,\,\mathrm{s.t.\,\,}}}

% Include other packages here, before hyperref.

% If you comment hyperref and then uncomment it, you should delete
% egpaper.aux before re-running latex.  (Or just hit 'q' on the first latex
% run, let it finish, and you should be clear).
% \usepackage[pagebackref=true,breaklinks=true,letterpaper=true,colorlinks,bookmarks=false]{hyperref}

\cvprfinalcopy % *** Uncomment this line for the final submission

 % *** Enter the CVPR Paper ID here

% Pages are numbered in submission mode, and unnumbered in camera-ready
\ifcvprfinal\pagestyle{empty}\fi

\usepackage{cite}

\usepackage{eucal,bibspacing}

%%
%% Save space
%%
%\renewcommand{\paragraph}{\textbf}
%
%
%\addtolength{\abovecaptionskip}{-.3cm}
%\addtolength{\belowcaptionskip}{-.3cm}
%%
%\addtolength{\parskip}{-0.02cm}
%%
%
%\addtolength{\textfloatsep}{-0.4cm}
%%\addtolength{\floatsep}{-0.4cm}
%
%
%\renewcommand\floatpagefraction{.9}
%\renewcommand\topfraction{.9}
%\renewcommand\bottomfraction{.9}
%\renewcommand\textfraction{.1}
%\setcounter{totalnumber}{50}
%\setcounter{topnumber}{50}
%\setcounter{bottomnumber}{50}
%
%% Commands like \normalsize, \small and \footnotesize could change the
%% length of \abovedisplayskip, \belowdisplayskip etc.
%%
%\expandafter\def\expandafter\normalsize\expandafter{%
%\normalsize\setlength\abovedisplayskip{3pt}}
%
%\expandafter\def\expandafter\normalsize\expandafter{%
%\normalsize\setlength\belowdisplayskip{3pt}}

\begin{document}

%%%%%%%%% TITLE
\title{Learning  Binary Codes and Binary Weights for Efficient Classification}

\author{Fumin Shen$^{\dagger}$, Yadong Mu$^{\ddagger}$,  Wei Liu$^{\sharp}$, Yang Yang$^{\dagger}$, Heng Tao Shen$^{\dagger\natural}$
\\
$^\dagger$ University of Electronic Science and Technology of China\\
$^\ddagger$ AT\&T Labs Research
$^\sharp$ Didi Research, Beijing China
$^\natural$ The University of Queensland
}

\maketitle
\thispagestyle{empty}

\begin{abstract}

This paper proposes a generic formulation that significantly expedites the training and deployment of image classification models, particularly under the scenarios of many image categories and high feature dimensions. As a defining property, our method represents both the images and learned classifiers using binary hash codes, which are simultaneously learned from the training data. Classifying an image thereby reduces to computing the Hamming distance between the binary codes of the image and classifiers and selecting the class with minimal Hamming distance. Conventionally, compact hash codes are primarily used for accelerating image search. Our work is first of its kind to represent classifiers using binary codes. Specifically, we formulate multi-class image classification as an optimization problem over binary variables. The optimization alternatively proceeds over the binary classifiers and image hash codes. Profiting from the special property of binary codes, we show that the sub-problems can be efficiently solved through either a binary quadratic program (BQP) or linear program. In particular, for attacking the BQP problem, we propose a novel bit-flipping procedure which enjoys high efficacy and local optimality guarantee. Our formulation supports a large family of empirical loss functions and is here instantiated by exponential / hinge losses. Comprehensive evaluations are conducted on several representative image benchmarks. The experiments consistently observe reduced complexities of model training and deployment, without sacrifice of accuracies. 
\end{abstract}

\section{Introduction}

In recent years, large-scale visual recognition problem has attracted tremendous research enthusiasm from both academia and industry owing to the explosive increase of data size and feature dimensionality. Classifying an image into thousands of categories often entails heavy computations by using a conventional classifier, exemplified by $k$ nearest neighbor ($k$-NN) and support vector machines (SVM), on a commodity computer. For the image recognition problem with many categories, the computational and memory overhead primarily stems from the large number of classifiers to be learned.
The complexities can be high at the stages of both training and deploying these classifier. Considering a classification task with $C$ different classes and $D$-dimensional feature representation, even the simplest linear models are comprised of $D \times C$ parameters. As an inspiring example to our work in this paper, the ImageNet dataset~\cite{deng2009imagenet} contains annotated images from 21,841 classes in total. When experimenting with some state-of-the-art visual features (\eg, 4096-dimensional deep neural networks feature), a huge number of 80 million parameters need to be learned and stored, which clearly indicates slow training and low efficacy at the deployment phase. Real-world applications (such as industrial image search engine) often require near-real-time response. The conventional ways of training multi-class image classifiers thus have much space to be improved.

Compact binary hash codes~\cite{LSH99} have demonstrated notable empirical success in facilitating large-scale similarity-based image search, referred to as image hashing in the literature. In a typical setting of supervised learning, the hash codes are optimized to ensure smaller Hamming distances between images of the same semantic kind. In practice, image hashing techniques have been widely utilized owing to its low memory footprint and theoretically-guaranteed scalability to large data.

Though the hashing techniques for image search has been a well-explored research area, its application on large-scale optimization still remains a nascent topic in the fields of machine learning and computer vision. Intuitively, one can harness the hash codes for the image classification task through naive methods such as $k$-NN voting. Both the training and testing images are indexed with identical hashing functions. A new image is categorized by the majority semantic label within the hashing bucket where it is projected into. However, since the hash codes are initially optimized for image search purpose, such a naive scheme does not guarantee high accuracy for image recognition.

The most relevant works to ours are approximately solving non-linear kernel SVM via hashing-based data representation~\cite{li2011hashing,li2013sign,mu2014hash}. These methods first designate a set of hashing functions that transform the original features into binary codes. The original non-linear kernels (\eg, RBF kernel) are theoretically proved to be approximated by the inner product between binary hash bits. Prominent advantages of such a treatment are two-folds: the required hash bits only weakly hinge on the original feature dimensionality, and meanwhile the non-linear optimization problem is converted into a linear alternative. As a major drawback, these works still rely on the regular real-valued based classifiers upon the binary features. Though it enables the direct application of linear solvers, the potential of binary codes is not fully utilized.

Our work is a non-trivial extension of the aforementioned line of research. We further elevate the efficacy of classification models by binarizing both the features and the weights of classifiers. In other words, our goal is to develop a generic multi-class classification framework. The classifier weights and image codes are simultaneously learned. Importantly, both of them are represented by binary hash bits. This way the classification problem is transformed to an equivalent and simpler operation, namely searching the minimal Hamming distance between the query and the $C$ binary weight vectors. This can be extremely fast by using the built-in XOR and popcount operations in modern CPUs. We implement this idea by formulating the problem of minimizing the empirical classification error with purely binary variables.

The major technical contributions of this work are summarized as below:
\begin{enumerate}
\item We define a novel problem by binarizing both classifiers and image features and simultaneously learning them in a unified formulation. The prominent goal is to accelerate large-scale image recognition. Our work represents an unexplored research direction, namely extending hashing techniques from fast image search to the new topic of hashing-accelerated image classification.

\item An efficient solver is proposed for the binary optimization problem. We decouple two groups of binary variables (image codes and binary classifier weights) and adopt an alternating-minimizing style procedure. Particularly, we show that the sub-problems are in the form of either binary quadratic program (BQP) or linear program. An efficient bit-flipping scheme is designed for the BQP sub-problem. Profiting from the special traits of binary variables, we are able to specify the local optimality condition of the BQP.

\item Our formulation supports a large family of empirical loss functions and is here instantiated by exponential / hinge losses. In our quantitative evaluations, both variants are compared with key competing algorithms, particulary a highly-optimized implementation of the SVM algorithm known as LibLinear~\cite{liblinear08}. Our proposed method demonstrates significant superiority in terms of train/test CPU time and the classification accuracy, as briefly depicted by Figure~\ref{fig:acc_sun}.
\end{enumerate}

\begin{figure}
\centering
\begin{subfigure}[b]{0.23\textwidth}
\includegraphics[width=\textwidth]{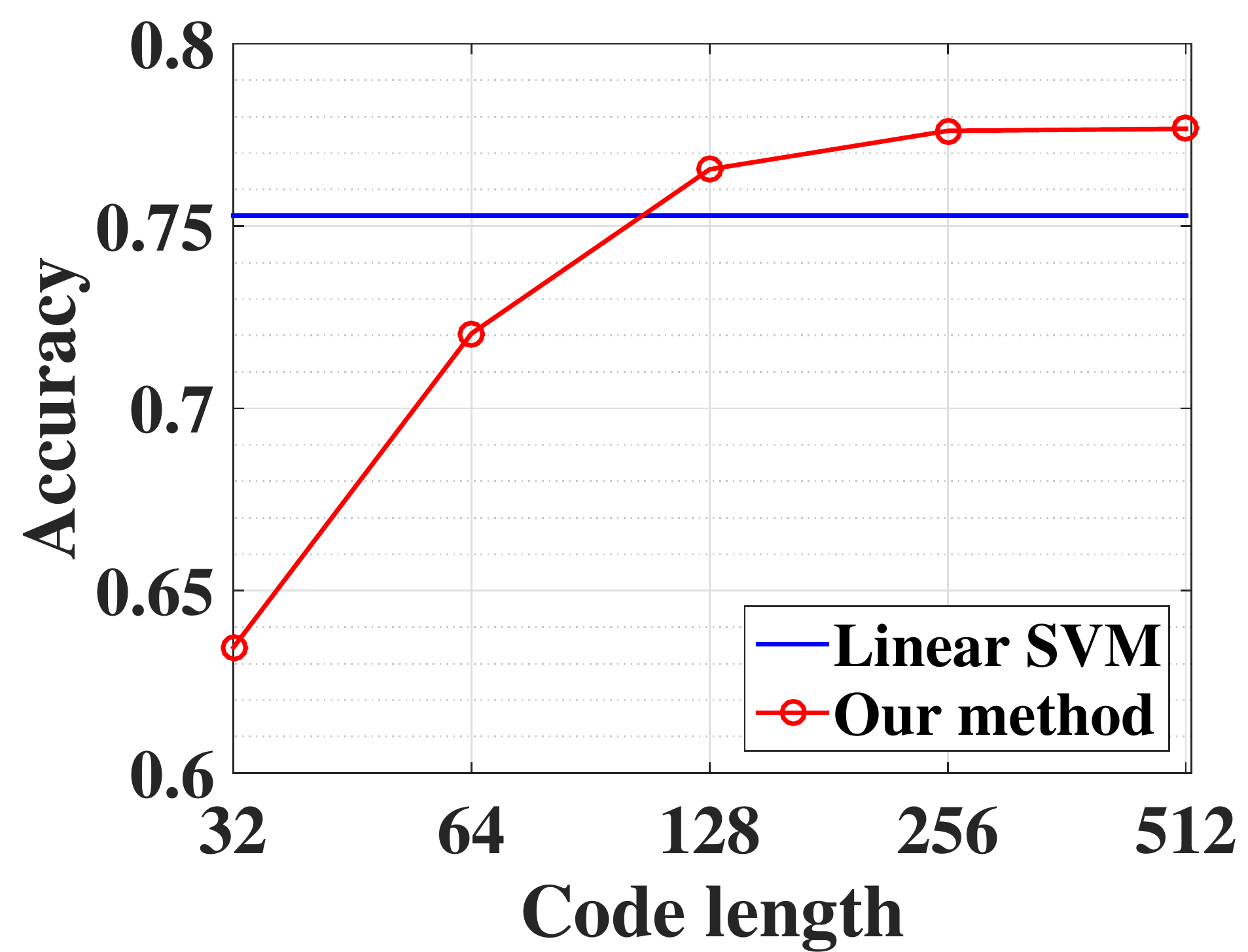}
\caption{Classification accuracy.}
\end{subfigure}
\begin{subfigure}[b]{0.23\textwidth}
\includegraphics[width=\textwidth]{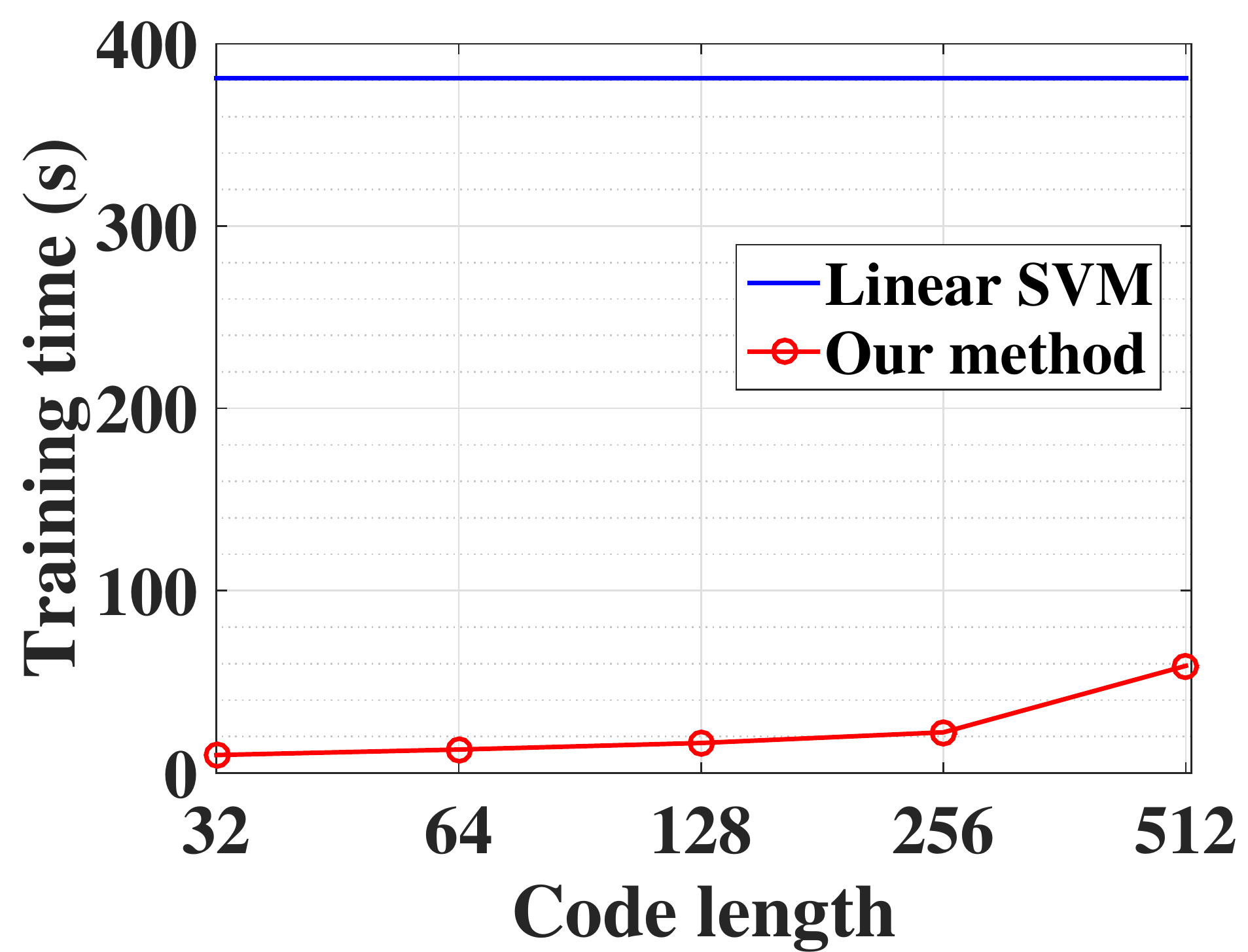}
\caption{Model training complexity.}
\end{subfigure}
\caption{\small Comparison of our method with the LibLinear implementation of Linear SVM for classification on the SUN dataset with 108K images from 397 scene categories. By coding both the image feature and learned classifiers with a small number of hash bits, our method achieves better results than Linear SVM, even with much smaller model training complexity. }
\label{fig:acc_sun}
\vspace{-0.2in}
\end{figure}

\section{Related work}
\label{Sec:related}

Let us first review the related works which strongly motivate ours. They can be roughly cast into two categories:

\vspace{0.1in}
\noindent \textbf{Hashing for fast image search and beyond}: learning compact hash codes~\cite{ICML13SHEN,SH08} recently becomes a hot research topic in the computer vision community. The proliferation of digital photography has made billion-scale image collections a reality, and efficiently searching a similar image to the query is a critical operation in such image collections. 

The seminal work of LSH~\cite{LSH99} sheds a light on fast image search with theoretic guarantee. In a typical pipeline of hash code based image search~\cite{SSH2012,xia2015sparse}, a set of hashing functions are generated either in an unsupervised manner or learned by perfectly separating similar / dissimilar data pairs on the training set. The latter is also called supervised hashing since it often judges the similarity of two samples according to their semantic labels. For an unseen image, it finds the most similar images in the database by efficiently comparing the corresponding hash codes. It can be accomplished in sub-linear time using hash buckets~\cite{LSH99}. Representative methods in this line include Binary Reconstructive
Embedding (BRE \cite{BRE2009}), Minimal Loss Hashing (MLH \cite{MLH2011}), Kernel-Based Supervised Hashing
(KSH \cite{KSH2012}), CCA based Iterative Quantization (CCA-ITQ \cite{gong2013iterative}), FastHash \cite{lin2014fast}, Graph Cuts Coding (GCC \cite{ge2014graph}), \etc.

The success of hashing technique is indeed beyond fast image search. For example, Dean et al.~\cite{Dean13} used hash tables to accelerate the dot-product convolutional kernel operator in large-scale object detection, achieving a speed-up of approximately 20,000 times when evaluating 100,000 deformable-part models.

\vspace{0.1in}
\noindent \textbf{Hashing for large-scale optimization}: Noting the Hamming distance is capable of faithfully preserve data similarity, it becomes a natural thought to extend it for approximating non-linear kernels. Optimizing with non-linear kernels generally require more space to store the entire kernel matrix, which prohibits its scalability to large data. Real vectors based explicit feature mapping~\cite{vedaldi2012efficient} partially remedies above issue, approximating kernel functions by the inner product between real vectors. However, they typically require high dimension towards an accurate approximation, and is thus beyond the scope of most practitioners. A more recent strand of research instead approximates non-linear kernel with binary bits, of which the prime examples can be found in~\cite{mu2014hash,li2013sign,li2011hashing}. In particular, Mu et al. developed a random subspace projection which transforms original data into compact hash bits. The inner product of hash code essentially plays the role of kernel functions. Consequently, the non-linear kernel SVM as their problem of interest can be converted into a linear SVM and resorts to efficient linear solvers like LibLinear~\cite{liblinear08}.

The philosophy underlying all aforementioned works can be summarized as binarizing the features and harnessing the compactness of binary code. We here argue that the potential of hashing technique in the context of large-scale classification has not been fully explored yet. Related research is still in its embryonic stage. For example, a recent work by Shen et al.~\cite{SDH15} proposed a Supervised Discrete Hashing (SDH) method under the assumption that good hash codes were optimal for linear classification. However, similar to other methods, SDH still classified the learned binary codes by real-valued weights. Thus the test efficiency for binary codes is still not improved compared to real-valued features.

After surveying related literature, we are motivated to advocate in this paper an extreme binary learning model, wherein both image features and classifiers shall be represented by binary bits. This way the learned models get rid of real-valued weight vectors and can fully benefit from high-optimized hash bit operators such as XOR.

\section{The proposed  model}
\label{Sec:prop}

Suppose that we have generated a set of binary codes $\BB = \{\Bb_i\}_{i=1}^n \in \{-1,1\}^{r \times n}$, where $\Bb_i$ is the
$r$-bit binary code for original data $\Bx_i$ from the training set $\BX = \{\Bx_i\}_{i=1}^n$. For simplicity, we assume a linear hash function
\begin{eqnarray}
h(\Bx) = \sgn(\BP^\top \Bx),
\end{eqnarray}
where $\BP \in \mathbb{R}^{d \times r}$.

In the context of linear classification, the binary codes  $\Bb$ is classified according to the maximum of the score vector
\begin{equation}
\BW^{\T}\Bb =  [\Bw_1^{\T}\Bb, \cdots,
\Bw_C^{\T}\Bb]^{\T},
\end{equation}
where  $\Bw_c \in \{-1,1\}^{r}$
is the binary parameter vector for class $c \in [1, \cdots, C]$.
Taking advantage of the binary nature of both $\Bw_c$ and $\Bb$, the inner product $\Bw_c^\top \Bb$ can be efficiently computed by $r - 2\mathbb{D_H}(\Bw_c,\Bb)$, where $\mathbb{D_H}(\cdot,\cdot)$ is the Hamming distance. Thereby the standard classification problem is transformed to searching the minimum from $C$ Hamming distances (or equivalently the maximum of binary code inner products).

Following above intuition, this paper proposes a multi-class classification framework, simultaneously learning the binary feature codes and classifier. Suppose $\Bb_i$ is the binary code of sample $\Bx_i$ and it shall be categorized as class $c_i$. Ideally, it expects the smallest Hamming distance (or largest inner product) to $\Bw_{c_i}$, in comparison with other classifier $\Bw_c$, $c \neq c_i$. An intuitive way of achieving this is through optimizing the inter-class ``margin". Formally, we can minimize the loss $\ell \big( -(\Bw_{c_i}^\top \Bb_i - \Bw_c^\top \Bb_i) \big)$, $\forall c$, where $\ell(\cdot)$ is a generic loss function. We re-formulate multi-class classification problem as below:
\begin{align}
\label{eqn:obj_general}
\min_{\BW, \BB} & \quad \sum_{i=1}^n \sum_{c=1}^C \ell \big( -(\Bw_{c_i}^\top \Bb_i - \Bw_c^\top \Bb_i) \big)  \\
\st & \quad \Bb_i \in \{-1,1\}^r, ~ \forall i, \;\Bw_c \in \{-1,1\}^r, ~  \forall c. \nonumber
\end{align}

We instantiate $\ell(\cdot)$ with the exponential loss (Section~\ref{Sec:exploss}) and hinge loss (Section~\ref{Sec:hingeloss}). In fact, the loss function in Problem~\eqref{eqn:obj_general} can be broadly defined. Any proper loss function $\ell(\cdot)$ can be applied as long as it is monotonically increasing.

\subsection{Learning with exponential loss}
\label{Sec:exploss}
Using the exponential loss function, we have the formulation below:
\begin{align}
\label{eqn:obj}
\min_{\BW, \BB} & \sum_{i=1}^n \sum_{c=1}^C \exp \left[ -(\Bw_{c_i}^\top \mathbf{b}_i - \Bw_c^\top \mathbf{\Bb}_i) \right]  \\
\st & \mathbf{\Bb}_i \in \{-1,1\}^r,~ \forall i,\;
 \Bw_c \in \{-1,1\}^r, ~\forall c. \nonumber
\end{align}

We tackle problem \eqref{eqn:obj} by alternatively solving the two sub-problems with $\BW$ and $\BB$, respectively.

\subsubsection{Classifying binary codes with binary Weights}
\label{SEC:W}

Assume $\BB$ is known. We iteratively update $\BW$ row by row, i.e., one bit each time for $\Bw_c, c = 1, \cdots, C$, while keep all other $r-1$ bits fixed.
Let $\Bw(k)$ denote the $k_{\mathrm{th}}$ entry of $\Bw$ and $\Bw(\setminus k)$ the vector which zeros its $k$-th element. We then have
\begin{align}
\label{EQ:decom}
\exp(\Bw^\top \Bb) =& \exp\left[ \Bw(\setminus k)^\top \Bb + \Bw(k)\Bb(k)\right]\nonumber\\
=& \exp\left[ \Bw(\setminus k)^\top \Bb \right] \cdot \exp\left[\Bw(k)\Bb(k)\right].
\end{align}
It can be verified that
\begin{eqnarray}
\label{EQ:linear1}
&& \exp\left[\Bw(k)\Bb(k)\right] \nonumber \\
&=& \left\{ \begin{array}{rc}
\frac{e^{-1}+e^1}{2} +  \frac{e^{-1}-e^1}{2} \cdot \Bw(k), & \Bb(k) = -1\\
\frac{e^{-1}+e^1}{2} - \frac{e^{-1}-e^1}{2} \cdot \Bw(k), &\Bb(k) = 1.
\end{array}
\right.
\end{eqnarray}

Denote $u = \frac{e^{-1}+e^1}{2}$ and $v = \frac{e^{-1}-e^1}{2}$. Equation \eqref{EQ:linear1} can be simplified by  the sign of $\Bb(k)$ as
\begin{equation}
\label{EQ:linear2}
\exp\left[\Bw(k)\Bb(k)\right] = u - v \cdot \Bb(k)\Bw(k).
\end{equation}
Equation \eqref{EQ:linear2} clearly shows the exponential function of the product of two binary bits equals to a linear function of the product. By applying \eqref{EQ:decom} and \eqref{EQ:linear2}, we write the loss term in \eqref{eqn:obj} as follows.
\begin{align}
\label{EQ:loss}
&\exp \left[ -(\Bw_{c_i}^\top \mathbf{b}_i - \Bw_c^\top \mathbf{\Bb}_i) \right] \\
=& \exp(\Bw_c^\top \mathbf{\Bb}_i) \cdot \exp(-\Bw_{c_i}^\top \mathbf{b}_i) \nonumber\\
=& \gamma_{ick} \cdot \left[u - v \cdot \Bb_i(k)\Bw_c(k))\right] \cdot \left[u + v \cdot \Bb_i(k)\Bw_{c_i}(k))\right],\nonumber
\end{align}
where the constant $\gamma_{ick} = \exp\left[ \big(\Bw_c(\setminus k) - \Bw_{c_i}(\setminus k)\big)^\top \Bb_i \right]$.
Clearly, the non-linear exponential loss becomes a quadratic polynomial with regard to $\Bw_c(k)$.

After merging terms with the same orders, optimizing problem~\eqref{eqn:obj}  with regard to $\Bw^k= [\Bw_1(k); \cdots; \Bw_C(k)]$ becomes
\begin{align}
\label{BQ}
\Bw^k \leftarrow & \arg \min_{\Bw^k} \; \frac{1}{2} \Bw^{k^\top } \BH^k \Bw^k +  \Bw^{k\top}\Bg^k,
\end{align}
where
\begin{equation}
\begin{array}{rcl}
\BH^k &=&  -2v^2 \BY \mathbf{\Gamma}^k, \\
\Bg^k &=& uv \BY (\Bb^k \odot \mathbf{\Gamma} \Bone) - uv \Gamma^\top \Bb^k.
\end{array} 
\end{equation}
Here $\mathbf{\Gamma}^{k} \in \mathbb{R}^{n \times C}$ includes its entries $\gamma_{ick}$, $\Bb^k$ is the $n$-dimension vector including the $k_{\mathrm{th}}$ binary bits of training data. $\BY \in \mathbb{R}^{C \times n}$ is the label matrix whose  entry $y_{ci}$ at coordinate $(c,i)$ equals to 1 if sample $\Bx_i$ belongs to class $c$ and 0 otherwise. $\odot$ denotes the element-wise product. $\Bone$ is the vector with all ones.

Problem \eqref{BQ} is a binary quadratic program (BQP) and can be efficiently solved by sequential bit flipping operation. A local optimum can be guaranteed.
We solve problem \eqref{BQ} by investigating the local optimality condition. Intuitively, for any local optimum, flipping any of its hash bits will not decrease the objective value of problem \eqref{BQ}. 
%Let $\H(k,l)$ be the matrix element of $\H$ at the coordinate $(k,l)$, and  
Let $\BH_{*,c}, \BH_{c,*}$ denote the column or row vector indexed by $c$ respectively. $\Bg(c)$ and $\Bw(c)$ represents the $c_{\mathrm{th}}$ element of $\Bg$ and $\Bw$, respectively. In problem \eqref{BQ}, collecting all terms pertaining to $\Bw(c)$ obtains
\begin{align}
f(\Bw^k(c)) &= \frac{1}{2} \left( \BH^{k\top}_{*,c} + \BH^{k}_{c,*} \right) \Bw^k \cdot \Bw^k(c) + \Bg^k(c) \cdot \Bw^k(c). \nonumber 
\end{align}
%&= \H_{k,*} \Bw \cdot \Bw(k) - \g(k) \cdot \Bw(k),
%where the second equality trivially follows from the symmetry of matrix $\H$.

%It is easily verified that, by flipping $\Bw(c)$, the gain of the objective function of problem \eqref{BQ} is
%\begin{align}
%& \Delta_{\Bw^k(c)\rightarrow -\Bw^k(c)} \\
%=&\; f(-\Bw^k(c)) - f(\Bw^k(c)) \nonumber \\
%=&\; -\left( \H^{k\top}_{*,c} + \H^k_{c,*} \right)  \Bw^k \cdot \Bw^k(c) + 2 \H^k(k,k) - 2 \g^k(k) \cdot \Bw^k(k).\nonumber 
%\end{align}
By flipping $\Bw(c)$, the gain of the objective function of problem \eqref{BQ} is
\begin{align}
\Delta_{\Bw^k(c)\rightarrow -\Bw^k(c)} = f(-\Bw^k(c)) - f(\Bw^k(c)).
\end{align}

Regarding the local optimality condition of problem \eqref{BQ}, we have the observation below:
\begin{proposition}
\label{thm:localopt}
\textbf{(Local optimality condition)}:
Let $\Bw^\ast$ be a solution of problem \eqref{BQ}. $\Bw^\ast$ is a local optimum when the condition $\Delta_{\Bw(c)\rightarrow -\Bw(c)} \ge 0$ holds for $c=1,\ldots,C$.
\end{proposition}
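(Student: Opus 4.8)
The plan is to reduce the claim to the elementary fact that, over the Boolean cube $\{-1,1\}^C$, the natural ``neighborhood'' of a point is the set of points at Hamming distance one from it, so that a local optimum is exactly a point from which no single-coordinate flip improves the objective. Write $F(\Bw^k) = \frac{1}{2}\Bw^{k\top}\BH^k\Bw^k + \Bw^{k\top}\Bg^k$ for the objective of problem~\eqref{BQ}, and for a fixed index $c$ let $\Bw^k_{(c)}$ denote the vector obtained from $\Bw^k$ by negating its $c$-th entry and leaving the rest unchanged.

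First I would establish the bookkeeping identity $F(\Bw^k_{(c)}) - F(\Bw^k) = \Delta_{\Bw^k(c)\rightarrow -\Bw^k(c)}$. To see this, group the monomials of $F$ into those that do not involve $\Bw^k(c)$ and those that do: the former are manifestly invariant under the flip, while the latter are precisely the terms gathered in $f(\Bw^k(c))$. Here one uses $\Bw^k(c)^2 = 1$ to note that the quadratic self-interaction $\frac{1}{2}H^k_{cc}\Bw^k(c)^2$ contributes only a constant and therefore drops out of the increment; consequently $f(\,\cdot\,)$ is affine in its argument and $f(-\Bw^k(c)) - f(\Bw^k(c))$ is exactly the change in $F$. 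Care is needed to propagate the sign change consistently through the quadratic form $\Bw^{k\top}\BH^k\Bw^k$, which is exactly where the identity $\Bw^k(c)^2=1$ does the work.

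Granting this, the proposition follows in one line. Suppose $\Delta_{\Bw^\ast(c)\rightarrow -\Bw^\ast(c)}\ge 0$ for all $c = 1,\dots,C$. The points of $\{-1,1\}^C$ at Hamming distance one from $\Bw^\ast$ are exactly $\Bw^\ast_{(1)},\dots,\Bw^\ast_{(C)}$, and by the identity above $F(\Bw^\ast_{(c)}) = F(\Bw^\ast) + \Delta_{\Bw^\ast(c)\rightarrow -\Bw^\ast(c)} \ge F(\Bw^\ast)$ for each $c$. Hence $\Bw^\ast$ attains the minimum of $F$ over its immediate Hamming neighborhood, which is the definition of a local optimum of the BQP~\eqref{BQ}.

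I expect the main obstacle to be the first step, namely making the correspondence between a single bit-flip and the quantity $\Delta$ fully rigorous and being explicit that the diagonal of $\BH^k$ plays no role because $\Bw^k(c)\in\{-1,1\}$. After that the result is immediate from the definition of discrete local optimality; one could optionally add the remark that, since $F$ is affine along each coordinate direction, enlarging the neighborhood beyond Hamming radius one brings nothing new, which justifies regarding the stated condition as the appropriate optimality certificate.
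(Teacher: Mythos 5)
Your proof is correct and follows essentially the same route as the paper: both arguments reduce local optimality over $\{-1,1\}^C$ to the statement that no single bit flip decreases the objective, with your version phrased directly via the Hamming-distance-one neighborhood and the paper's phrased as a contradiction (a negative $\Delta_{\Bw(c)\rightarrow -\Bw(c)}$ would permit a further improving flip). Your explicit bookkeeping step identifying $\Delta$ with the exact change in the quadratic objective, including the observation that the diagonal term is constant since $\Bw^k(c)^2=1$, is a slightly more careful rendering of what the paper leaves implicit, but it is not a different approach.
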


\begin{proof}
The conclusion holds by a simple application of proof of contradiction. Recall that $\Bw$ is a binary vector. Flipping any bit of $\Bw$ will incur specific change of the objective function as described by $\Delta_{\Bw(i)\rightarrow -\Bw(i)}$. When the changes incurred by all these flipping operations are all non-negative, $\Bw^\ast$ is supposed to be locally optimal. Otherwise, we can flip the bit with negative $\Delta_{\Bw(i)\rightarrow -\Bw(i)}$ to further optimize the objective function.
\end{proof}

With the above analysis, we summarize our algorithm for updating the $C$-bit $\Bw^k$ as in Algorithm \ref{alg:bitflip}.

\begin{algorithm}[tb]
   \caption{Sequential bit flipping}
   \label{alg:bitflip}
   \begin{small}   
\begin{algorithmic}[1]
   \WHILE{local optimality condition does not hold}
    \STATE Calculate the bit-flipping gain $\Delta_{\Bw(c)\rightarrow -\w(c)}$ for $c=1,\ldots,C$;
    \STATE Select $\hat c = \arg \min_c \Delta_{\Bw(c)\rightarrow -\Bw(c)}$ and $\Delta_{min} = \min_c \Delta_{\Bw(k)\rightarrow -\Bw(c)}$;
    \IF{$\Delta_{min} < 0$} 
    \STATE Set $\Bw(c) \leftarrow -\Bw(c)$;
    \ELSE
    \STATE Exit;
    \ENDIF
   \ENDWHILE
\end{algorithmic}
\end{small}
\end{algorithm}

%\section{Updating $\P$ with $\Bw$ fixed}

\subsubsection{Binary code learning}
Similar as the optimization procedure for $\BW$, we solve for $\BB$ by a coordinate descent scheme. In particular, at each iteration, all the rest $r-1$ hash bits are fixed except for the $k$-th hash bit $\Bb^k = \left[\Bb_1(k); \cdots; \Bb_n(k)\right]$. Let $\Bb_i(\setminus k)$ denote the vector which zeros its $k$-th element $\Bb_i(k)$.
We rewrite equation \eqref{EQ:loss} w.r.t $\Bb_i(k)$ as 
\begin{align}
& \exp \left[ -(\w_{c_i}^\top \mathbf{b}_i - \w_c^\top \mathbf{b}_i)\right] \\
=& z_{ick}  \cdot \exp \left[  ( \w_c(k) - \w_{c_i}(k)) \mathbf{b}_i(k) \right] \nonumber 
\end{align}
where $z_{ick}  = \exp \left[ (\w_c - \w_{c_i})^\top  \mathbf{b}_i(\setminus k) \right]  $.

Similar as \eqref{EQ:linear1}, we have
\begin{eqnarray}
&& \exp \left[ ( \w_c(k) - \w_{c_i}(k)) \mathbf{b}_i(k) \right] \nonumber \\
&=& \left\{ \begin{array}{rc}
 0, & \w_{c_i}(k) = \w_c(k) \\
\frac{e^{-2}+e^2}{2} +  \frac{e^{-2}-e^2}{2} \cdot \mathbf{b}_i(k), & \w_{c_i}(k) = 1,\w_c(k)=-1 \\
\frac{e^{-2}+e^2}{2} -  \frac{e^{-2}-e^2}{2} \cdot \mathbf{b}_i(k), & \w_{c_i}(k) = -1,\w_c(k)=1.
\end{array}
\right. \nonumber
\end{eqnarray}
We can see that, the non-linear exponential loss term becomes either a constant or linear function with regard to $\mathbf{b}_i(k)$.

Denote $u' = \frac{e^{-2}+e^2}{2}$, $v' = \frac{e^{-2}-e^2}{2}$. Let matrix $\mathbf{Z}^k \in \mathbb{R}^{n \times C}$ include its entry at the coordinate $(i,c)$ as $z_{ick}$ if $\w_{c_i}(k) = 1,\w_c(k)=-1$ and 0 otherwise; similarly let matrix $\mathbf{\bar{Z}}^k \in \mathbb{R}^{n \times C}$ include its entry at the coordinate $(i,c)$ as $z_{ick}$ if $\w_{c_i}(k) = -1,\w_c(k)=1$ and 0 otherwise. Then the loss  in \eqref{eqn:obj} can be written as w.r.t $\Bb^k$
\begin{align}
\label{P} 
&\sum_{i=1}^n \sum_{c=1}^C \exp \left[ -(\Bw_{c_i}^\top \mathbf{b}_i - \Bw_c^\top \mathbf{\Bb}_i) \right] \\
=&\sum_{i=1}^n \sum_{c=1}^C \mathbf{Z}^k (i,c) \cdot (u'+v'\Bb_i(k)) + \mathbf{\bar{Z}}^k (i,c) \cdot (u'-v'\Bb_i(k))\nonumber\\
=& \sum_{i=1}^n \mathbf{z}^k(i) \cdot  (u'+v'\Bb_i(k)) + \mathbf{\bar{z}}^k(i)\cdot(u'-v'\Bb_i(k)),
\end{align}
where $\Bz^k(i) = \sum_{c=1}^C \BZ^k (i,c)$ and
$\bar{\Bz}^k(i) = \sum_{c=1}^C \bar{\BZ}^k (i,c)$.

Then we have the following optimization problem
\begin{align}
\min_{\Bb^k} & \; v' (\Bz^k - \bar{\Bz}^k)^\top \Bb^k \nonumber\\
\st & \; \Bb^k \in \{-1,1\}^r,
\end{align}
which has a optimal solution $\Bb^k = -\sgn(v' (\Bz^k - \bar{\Bz}^k))$.

\begin{figure}
\centering
\begin{subfigure}[]{0.23\textwidth}
\includegraphics[width=1\textwidth]{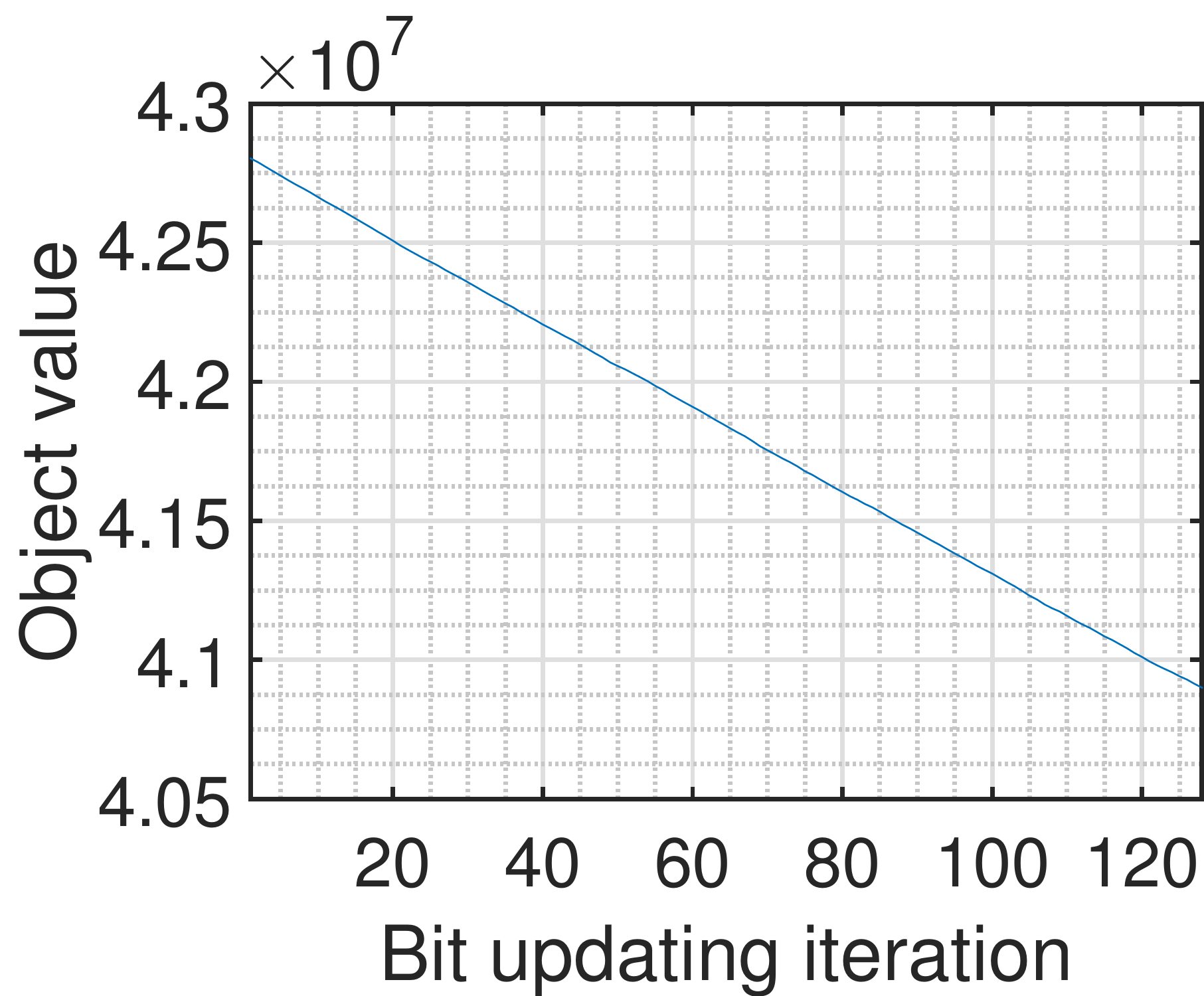}
\caption{Updating $\BW$.}
\end{subfigure}
\begin{subfigure}[]{0.23\textwidth}
\includegraphics[width=1\textwidth]{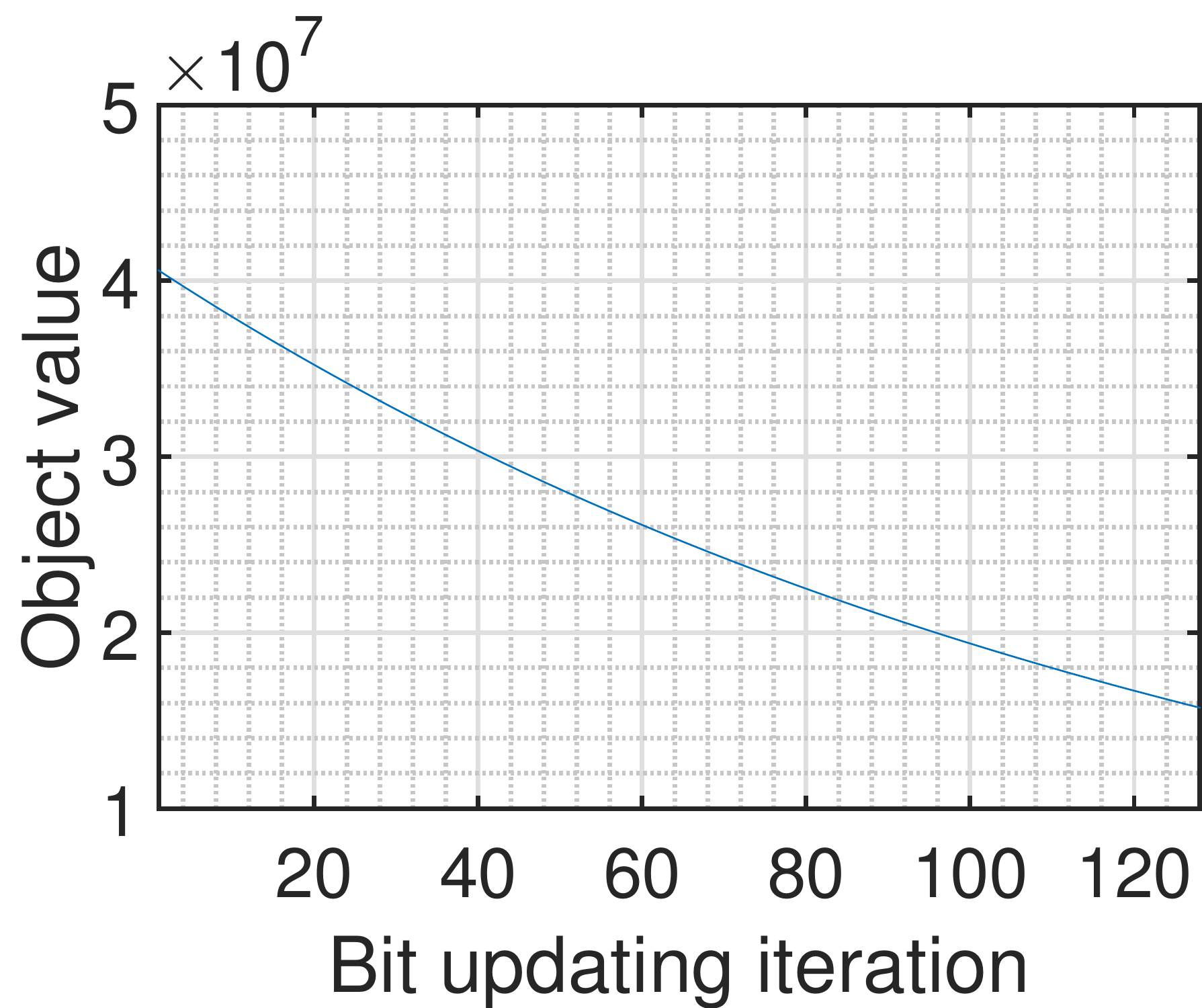}
\caption{Updating $\BB$.}
\end{subfigure}
\caption{Object value as a function of bit updating iteration $k$ in optimizing $\BW$ and $\BB$ on the dataset of SUN397.}
\label{fig:obj}
\end{figure}
Figure~\ref{fig:obj} shows the objective value as a function of the bit updating iteration number. As can be seen, with the proposed coordinate descent optimizing procedure for both $\BW$ and $\BB$, the object value consistently decreases as updating the hash bits in each sub-problem. The optimization for the original problem \eqref{eqn:obj}  typically converges in less than 3 iterations of alternatively optimizing $\BW$ and $\BB$.

\subsection{Learning with the hinge loss}
\label{Sec:hingeloss}
With the widely used hinge loss, we obtain
\begin{align}
\label{EQ:svm}
\min_{\BB,\BW,\mathbb{\xi}} &\quad \lambda ||\BW||^2 + \sum_{i=1}^n \sum_{c=1}^C \xi_{i} \\
\st & \quad \forall i,c \quad  \Bw_{c_i}^{\T}\Bb_i + y_{ci} -  \Bw_c^{\T}\Bb_i \geq 1 - \xi_{i},\nonumber\\
&\quad  \Bb_i = h(\Bx_i), ~ \forall i, \;\w_c \in \{-1,1\}^r, ~  \forall c. \nonumber
\end{align}
% \\ &\quad  \BB\BB^{\T} = n\BI_r, \; \BB\Bone = \mathbf{0}.\nonumber
Here $\xi_{i} \geq 0$ is the slack variable; 
$\BI_r$ is the $r \times r$ identity matrix. 
We denote 
%$\BY = \{\By_i\}_{i=1}^n \in \mathbb{R}^{C \times n}$  the groundtruth label matrix, where 
$y_{ci} = 1$ if $\Bx_i$ belongs to class $c$ and $-1$
otherwise.
%It is not difficult to see that 
The non-negative constraint $\xi_{i} \ge 0$ always holds: If $\xi_{i} < 0$ for a particular $\Bb_i$, the first constraint in \eqref{EQ:svm} will be violated when $y_{ci} = 1$. In other words, the constraint corresponding to the case the label of $\Bb_i$ equals to $c$ ensures the non-negativeness of $\xi_{i}$.
This formulation is similar to the multi-class SVM \cite{crammer2002algorithmic} except that it is 
%regularized by the approximation loss of learning codes and 
exposed to the binary constraints of both input features and classification matrix.

Similar as with the exponential loss, we tackle problem \eqref{EQ:svm} by alternatively solving the two sub-problems regarding to $\BW$ and $\BB$, respectively.
Observing  $||\BW||^2$ is constant, we can write problem \eqref{EQ:svm} as w.r.t. $\BW$, 

\begin{align}
\label{EQ:W1}
\min_{\BW} &\quad   \sum_{i=1}^n \sum_{c=1}^C  ( \Bw_c^{\T}\Bb_i -\Bw_{c_i}^{\T}\Bb_i)\\
\st &\quad  \Bw_c \in \{-1,1\}^r, \forall c. \nonumber
\end{align}

Collecting all terms with $\Bw_c,\;\forall c$, problem \eqref{EQ:W1} writes
\begin{align}
\label{EQ:Hinge3}
\min_{\BW} &\quad \sum_{c=1}^C \Bw_c^{\T}\big(\sum_{i=1}^n\Bb_i - C\sum_{i=1, c_i=c}^n\Bb_i\big) \\
\st & \quad \Bw_c \in \{-1,1\}^r, \forall c, \nonumber
\end{align}

which has optimal solution
\begin{align}
\label{EQ:W}
\Bw_c &=  \sgn\big( C \sum_{i=1, c_i=c}^n\Bb_i - \sum_{i=1}^n\Bb_i \big), \forall c.
\end{align}

For the sub-problem regarding to $\BB$, we first let matrix $\BW^o$ of size $r \times n$ include its $i_{\mathrm{th}}$ column as $\Bw^o_i =  \sum_{c=1}^C \Bw_c -  C\Bw_{c_i}$. Problem \eqref{EQ:W1} writes w.r.t. $\BB$ as
\begin{align}
\label{EQ:B1}
\min_{\BB}  &\;  \trace(\BW^{o\T}\BB)\\
\st & \;  \BB \in \{-1,1\}^{r \times n}. \nonumber
\end{align}
$\BB$ can be efficiently computed by $-\sgn(\BW^{o})$. It is clear that, both of the  two sub-problems associated with $\BB$ and $\BW$ have closed-form solutions, which significantly reduce the computation overhead of classifier training and binary code learning.

\subsection{Binary code prediction}
With the  binary codes $\BB$ for training data $\BX$  obtained,  the hash function $h(\Bx) = \sgn(\BP^{\T}\Bx)$ is obtained by solving a simple linear regression system $$\BP = (\BX^{\T}\BX)^{-1}\BX^{\T}\BB.$$
Then for a new sample $\Bx$, the classification is conducted by searching the minimum of $C$ Hamming distances:
\begin{equation}
c^{\ast} = \arg \min_c \{\mathbb{D_H}\big(\Bw_c, h(\Bx)\big)\},\; c = 1, \cdots, C.
\end{equation}

The binary coding step occupies the main computation in the testing stage, which is $O(dr)$ in time complexity.

\section{Experiments}
\label{Sec:exp}

\begin{table*}
\def\arraystretch{1.4}
\centering
\caption{Comparative results in terms of test accuracy (\%), training and testing time (seconds). Experiments are conducted on a standard PC with a quad-core Intel CPU and 32GB RAM. For LSH, CCA-ITQ, SDH and our methods, 128 bits are used. OVA-SVM and Multi-SVM is performed with the implementation of LibLinear, where the best accuracies are reported  with parameter $c$ chosen from \{1e-3, 1e-2, 1e-1, 1, 1e1, 1e2, 1e3\}.}
\begin{tabular}{c|ccc|ccc|ccc}
\hline\hline
\multirow{2}{*}{ Method} & \multicolumn{3}{c|}{SUN397} & \multicolumn{3}{c|}{ImageNet} & \multicolumn{3}{c}{CIFAR-10}\\
& acc (\%) & train time  & test time & acc (\%) & train time  & test time& acc (\%) & train time  & test time\\
\hline
OVA-SVM &\textbf{77.39}& 818.87&1.55e-5 &\textbf{79.84} &151.02 &1.15e-5 &57.1 &55.17&4.02e-7\\
Multi-SVM &75.28  &380.94 &1.01e-5 &79.48 & 93.12& 1.21e-5&{57.7} &35.55&4.27e-7\\
%RF-SVM & &  & & & & & & & \\
\hline
LSH &54.11 & 417.42 &7.75e-6 &58.16 &107.41 &1.32e-5 &39.0 &39.64 &2.26e-6 \\
CCA-ITQ &69.33 & 452.34&8.78e-6 &76.30 &142.95 & 1.25e-5&56.4 &47.23& 2.35e-6\\
SDH  &72.56 & 2522.33&7.43e-6 &76.64 &1102.21 & 1.43e-5&55.3&115.63 &2.32e-6\\
Ours-Exponential& 75.44&772.11 &3.67e-6&\textbf{79.04} &245.14 & 6.54e-6&\textbf{59.2} &16.01 &2.01e-6\\
Ours-Hinge&\textbf{76.56} &\textbf{16.45} &3.86e-6  &77.88 & \textbf{35.16}& 6.86e-6&54.2 & \textbf{2.31}&2.13e-6\\
\hline\hline
\end{tabular}
\label{tab:3data}
\end{table*}

In this section, we evaluate the proposed two methods on three large-scale datasets: CIFAR-10\footnote{\url{http://www.cs.toronto.edu/~kriz/cifar.html}.},  SUN397 \cite{xiao2010sun} and ImageNet \cite{deng2009imagenet}. As a subset of the well-known 80M tiny image collection \cite{80Mtiny2008}, the CIFAR-10 dataset consists of 60,000 images which are manually labelled as 10 classes with $6,000$ samples for each class. We represent each image in this dataset by a GIST feature vector \cite{GIST2001} of dimension $512$. The whole dataset is split into a test set with $1,000$ samples and a training set with all remaining samples.
SUN397 \cite{xiao2010sun} contains about 108K images
from 397 scene categories, where each image is represented by a 1,600-dimensional feature vector extracted by PCA from 12,288-dimensional Deep Convolutional Activation Features \cite{gong2014multi}. We use a subset of this dataset including 42 categories  with each containing more than 500 images; 100 images are sampled uniformly randomly from each category to form a test set of 4,200 images.
As a subset of ImageNet \cite{deng2009imagenet}, the large dataset ILSVRC 2012 contains over 1.2 million images of totally 1,000 categories. We form the evaluation database by the 100 largest classes with total 128K images from training set, and 50,000 images from validation set as test set. We use the 4096-dimensional features extracted by the convolution neural networks (CNN) model \cite{krizhevsky2012imagenet}.  

The proposed methods (denoted by Ours-Exponential and Ours-Hinge for the exponential and hinge loss, respectively) are extensively  compared with two popular linear classifiers: one-vs-all linear  SVM (OVA-SVM) and multi-class SVM (Multi-SVM \cite{crammer2002algorithmic}), both of which are implemented using the LibLinear software package \cite{liblinear08}. For these two methods, we tune the parameter $c$ from  the range [1e-3, 1e3] and the best results are reported.  
 We also compare our methods  against several state-of-the-art binary code learning methods including Locality Sensitive Hashing (LSH) implemented by signed random projections, CCA-ITQ \cite{gong2013iterative}, and SDH \cite{SDH15}. 
The classification results of these hashing methods are obtained by performing the multi-class linear SVM over the predicted binary codes by the corresponding hash functions. 
We use the public codes and suggested parameters of these methods from the authors. We extensively evaluate these compared methods in terms of storage memory overhead, classification accuracy and computation time.

\subsection{Accuracy and computational efficiency}
In this part, we extensively evaluate the proposed two methods with the compared algorithms in both classification accuracy and computation time.  We use 128-bit for our method and the three hashing algorithms  LSH, CCA-ITQ and SDH. 

We report   the results on {SUN397}, {ImageNet} and {CIFAR-10} in Table~\ref{tab:3data}. Between the  algorithms with different loss functions of our proposed model, we can see that the method with the exponential loss performs slightly better than that with the hinge loss, while the latter one  benefits from a much more efficient training. This is not surprising because Ours-Hinge solves two alternating sub-problems  both with closed-form solutions.

 Compared to other methods, the results clearly show that,  our methods  achieve competitive (or even better) classification accuracies on all three large-scale datasets  with the state-of-the-art linear SVMs. In the meanwhile,  even being constrained to learn binary classification weights, our methods obtain much better results than the compared hashing algorithms. Specifically, Ours-Exponential outperforms the best results obtained the hashing algorithms by 2.88\%,  2.4\% and 2.8\% on SUN397, ImageNet and CIFAR-10, respectively.

In terms of training time, we can see that  our method with the hinge loss runs way faster than all other methods on all the evaluated three datasets. In particular, on SUN397, Ours-Hinge is $50\times$ and $23\times$ faster than the LibLinear implementations of one-vs-all SVM and mult-class SVM. Compared with the hashing algorithm, our method runs over $28\times$ faster than the fastest LSH followed by Liblinear. For testing time, the benefit of binary dimension reduction for our methods together with other three hashing algorithms is clearly shown on the SUN397 dataset with a large number of categories. Our methods requires less testing time than the hashing based classification  methods, which is  due to the extremely fast classification implemented by searching in the Hamming space.

We also evaluate the compared methods with different code lengths, where 
the detailed results on SUN397 and ImageNet  are shown in Figure~\ref{fig:acc_two}. From Figure~\ref{fig:acc_two}, it is clear that with a relatively small number of bits (\eg, 256 bits), our method can achieve close classification performance to or even better than the real-valued linear SVM with real-valued features. 
We can also observe that
our method consistently outperforms other hashing algorithms by noticeable gaps at all code lengths on SUN397. On the ImageNet dataset, our method achieves  marginally better results than SDH and CCA-ITQ, while much better than the random LSH algorithm.

\begin{figure}
\centering
\begin{subfigure}[]{0.3\textwidth}
\includegraphics[width=1\textwidth]{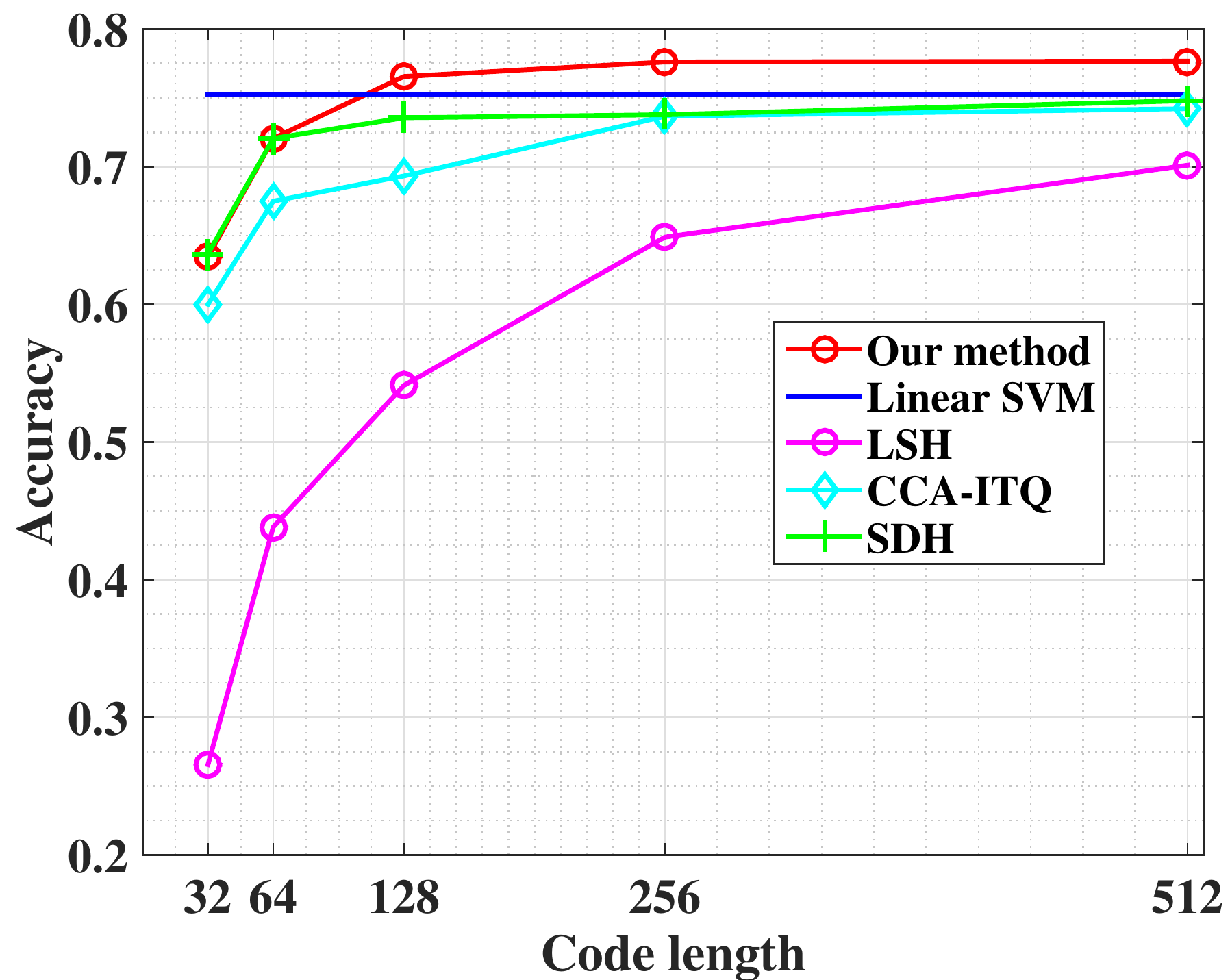}
\caption{ SUN397}
\end{subfigure}
\begin{subfigure}[]{0.3\textwidth}
\includegraphics[width=1\textwidth]{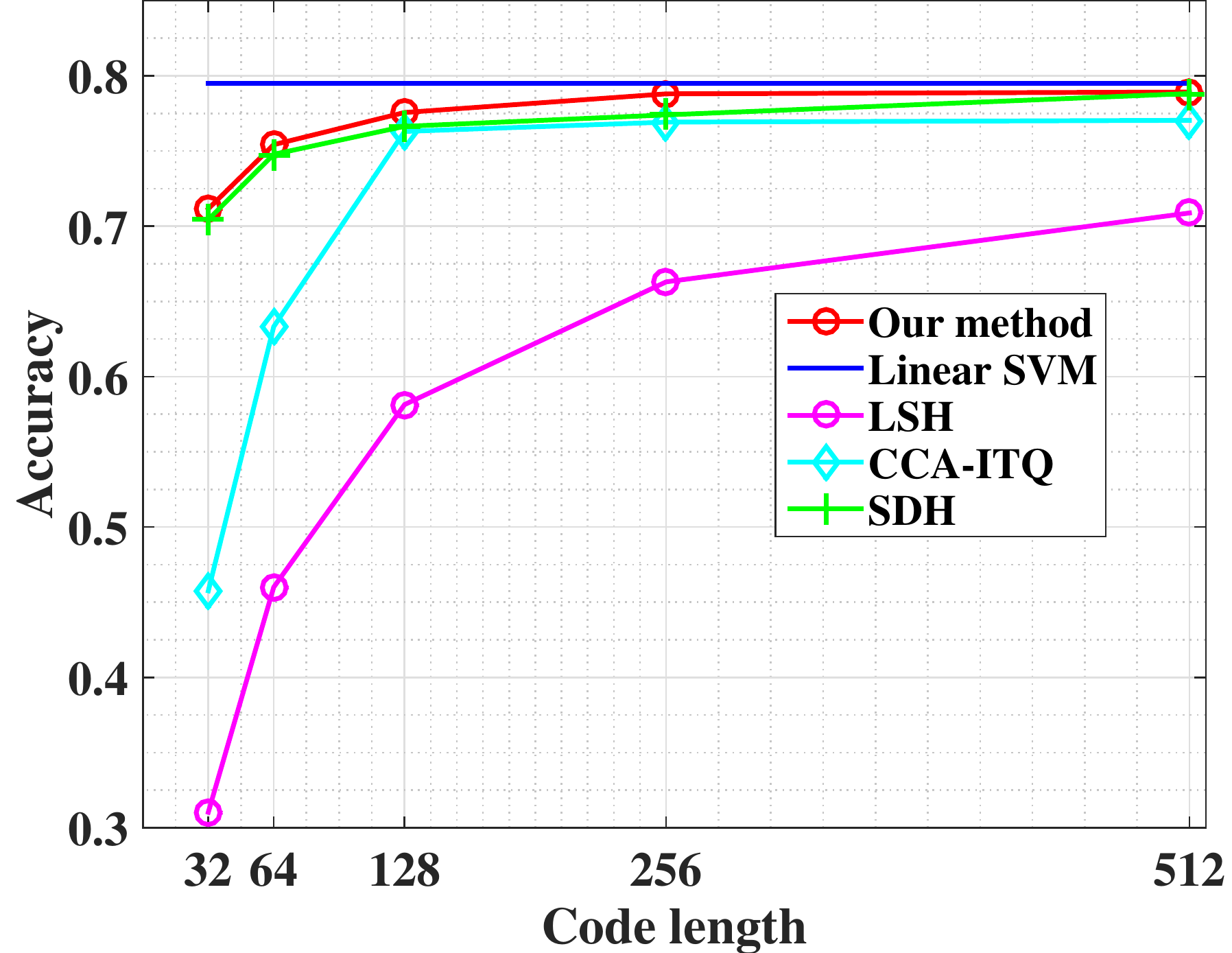}
\caption{ ImageNet}
\end{subfigure}
\caption{Comparative results of various algorithms in classification accuracy on SUN397 and ImageNet.}
\label{fig:acc_two}
\end{figure}

\begin{figure}
\centering
\begin{subfigure}[]{0.3\textwidth}
\includegraphics[width=1\textwidth]{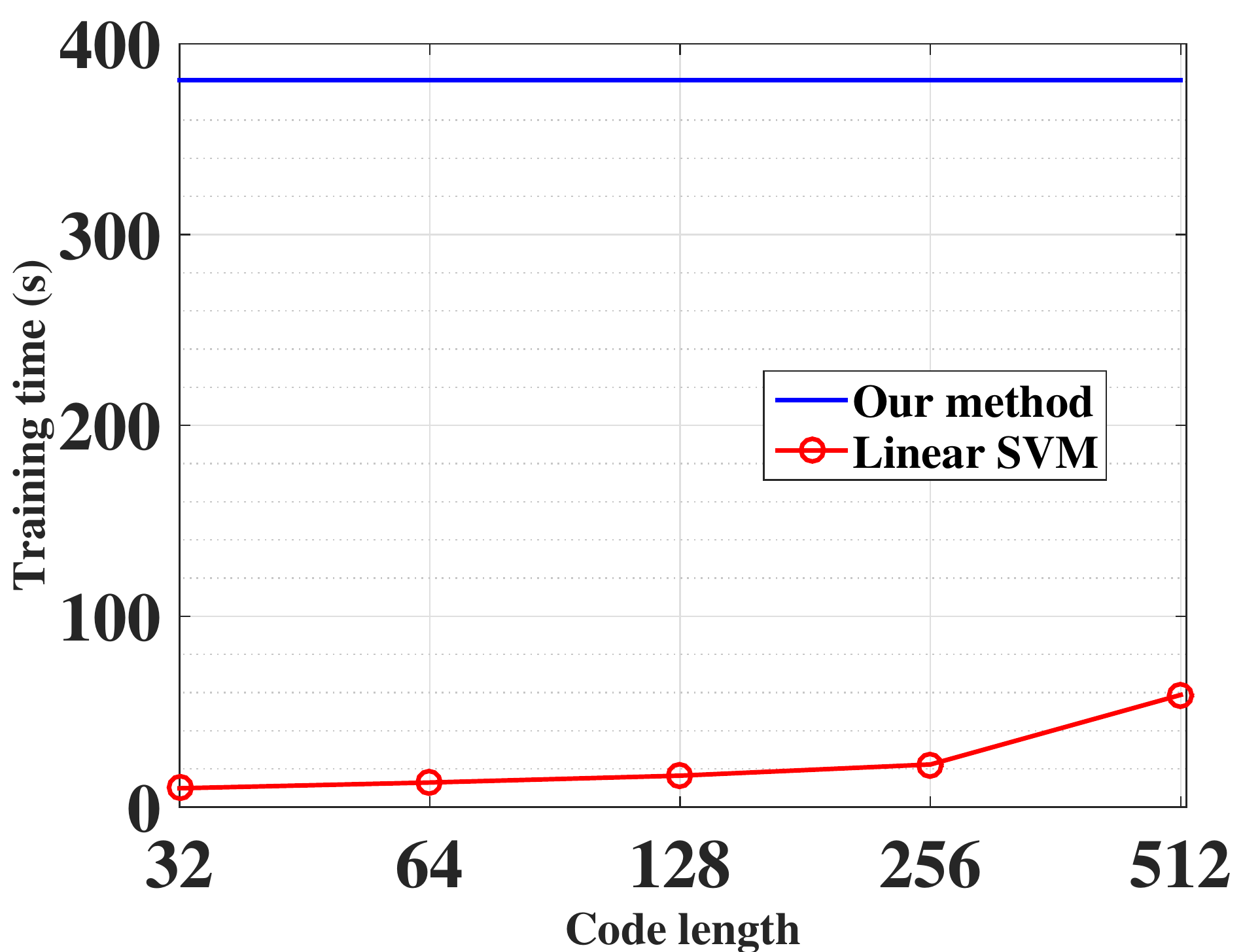}
\caption{ SUN397}
\end{subfigure}
\begin{subfigure}[]{0.3\textwidth}
\includegraphics[width=1\textwidth]{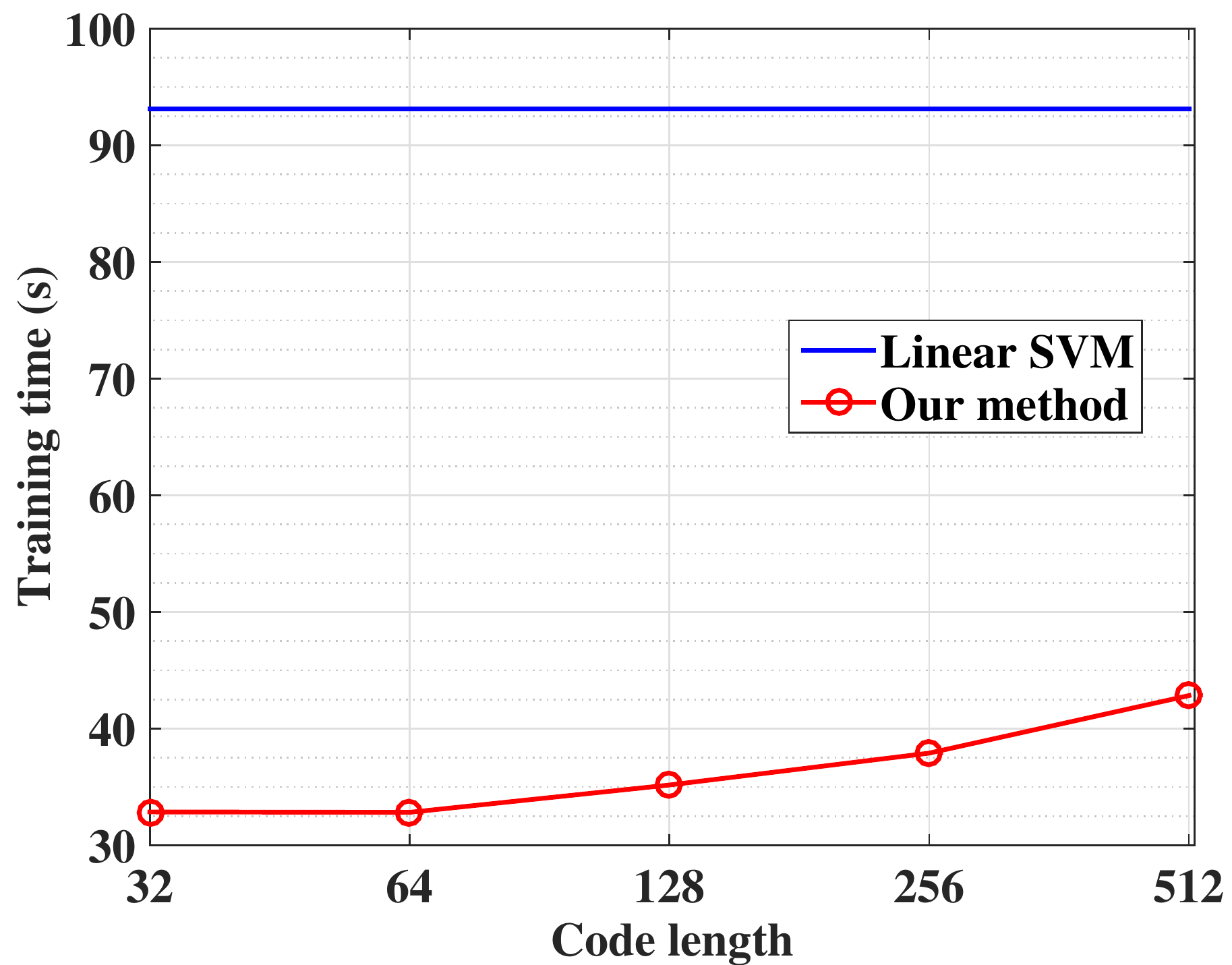}
\caption{ ImageNet}
\end{subfigure}
\caption{Comparative results of various algorithms in training time on SUN397 and ImageNet.}
\label{fig:time_two}
\end{figure}

Figure~\ref{fig:time_two} demonstrates the consumed training time by our method and Linear SVM by the Liblinear solver on two large-scale datasets. The  computation efficiency advantage of our method  is clearly shown. Our method has a nearly linear training time complexity with the code length, which can  facilitate its potentially applications in high dimensional binary vector learning.

\begin{table}[]
\def\arraystretch{1.4}
\centering
\caption{Memory overhead (MB) to store the training features and classification model using Linear SVM and our method (128-bit). Note that, for our method, the trained model includes the real-valued hashing matrix ($\BP$) and the binary classification weight matrix $\BW$.}
\begin{tabular}{c|cc|cc}
\hline\hline
\multirow{2}{*}{Dataset} & \multicolumn{2}{c|}{ Training features} & \multicolumn{2}{c}{Classification model}\\
& Linear SVM & Ours& Linear SVM & Ours\\
\hline
 ImageNet&3943.91 & 24.37&30.86 &9.92\\
SUN397 &1283.83 & 2.01&4.79 &1.54\\
\hline\hline
\end{tabular}
\label{tab:mem}
\end{table}
\subsection{Running memory overhead}
In this subsection, we compare our method with  linear SVM in term of running memory overhead for loading training features in the training stage and storing classification models in the testing stage. For our method, the trained model includes the real-valued hash function matrix ($\BP$) and the binary classification weight matrix.
 The results are reported in Table~\ref{tab:mem}. It is clearly shown that our approach requires much less memory than Linear SVM for loading both the training features and classification models. Taking the ImageNet database for example, Linear SVM needs over 150 times more RAM than our method for the training data, and over 3 times more RAM for the trained models.

\section{Conclusions}
\label{Sec:conc}

This work proposed a novel  classification framework, by which classification  was equivalently transformed to searching the nearest binary weight code in the Hamming space. Different from previous methods, both the feature  and classifier weight vectors were simultaneously learned with binary hash codes. Our framework could employ a large family of empirical loss functions, and we here especially studied the representative exponential and hinge loss. For the  two sub-problems regarding to the binary classifier and image hash codes, a binary quadratic program (BQP)  and linear program was formulated, respectively. In particular, for the BQP problem, a novel bit-flipping procedure which enjoys high efficacy and local optimality guarantee was presented. 
The two methods with exponential loss  and  hinge loss 
were extensively evaluated on several large-scale image datasets. Significant computation overhead reduction of model training and deployment were obtained, while without sacrifice of classification accuracies.

{
\bibliographystyle{IEEEtran}
\bibliography{FSRef}
}

\end{document}